\documentclass[12pt]{article}
\usepackage{fullpage}
\usepackage{hyperref}
\usepackage{amsmath}
\usepackage{amsfonts}
\usepackage{amssymb}
\usepackage{amsthm}
\usepackage{xspace}
\usepackage[T1]{fontenc}
\usepackage{newtxtext}

\catcode`\@=11
\makeatletter

\newcommand{\newfontobj}[2]{%
	\def#1{%
		\@ifnextchar[{%
			\newfontobj@opt{#2}%
		}{%
			\newfontobj@noopt{#2}%
		}%
	}%
}

\def\newfontobj@opt#1[#2]#3{%
	\expandafter\DeclareRobustCommand\csname#2\endcsname{%
		\ensuremath{\mathord{\newfontobj@typeset{#1}{#3}}}\xspace
	}%
}

\def\newfontobj@noopt#1#2{%
	\newfontobj@opt{#1}[#2]{#2}%
}

\newcommand{\newfontobj@typeset}[2]{%
	\mathchoice
	{\text{#1{#2}}}%
	{\text{#1{#2}}}%
	{\text{\scriptsize #1{#2}}}%
	{\text{\tiny #1{#2}}}%
}

\makeatother

\newfontobj{\class}{\textsc}
\newfontobj{\advice}{\textit}
\newfontobj{\reg}{\textrm}

\newtheorem{theorem}{Theorem}[section]
\newtheorem{lemma}[theorem]{Lemma}

\newtheorem{corollary}[theorem]{Corollary}

\newcommand{\negl}{\operatorname{negl}}
\newcommand{\K}{\mathrm{K}}
\renewcommand{\H}{\mathrm{H}}
\newcommand{\kpoly}{\K^{\poly}}
\newcommand{\E}[1]{\mathbb{E}_{x\sim #1}}
\class{DTISP}
\class{NTISP}
\class[kTISP]{$\Sigma_k$TISP}
\class{ID}
\class{NQL}
\class{coNQL}
\class{LIN}
\class{CHECK} \class{TRUE} \class{FALSE}
\class{ALT}
\class{NL} \class{LOGCFL} \class{coNL}
\class{P} \class{NC} \class{FP} \class{UP}
\class{FewP} \class{NP} \class{SAT}
\class{coNP} \class{PSPACE} \class{EXP}
\class{NPSPACE} \class{R} \class{BPP}
\class[sharpp]{\#P} \class[parityp]{$\oplus$P}
\class[stp]{S$_2^p$}
\class[modkp]{Mod$_k$P}
\class[snph]{$\Sigma^p_{s(n)}$}
\class[snpha]{$\Sigma^{p,A}_{s(n)}$}
\class[kph]{$\Sigma^p_{k}$}
\class[modtwop]{Mod$_2$P} \class{SPP}
\class{ACC} \class{AND} \class{MOD} \class{NOT}
\class{OR} \class{GapP} \class{PP}
\class[cep]{C$_=$P} \class{CNF} \class{PH}
\class{IP} \class{NEXP} \class{MAXSNP}
\class{DNF}
\class{DSPACE}
\class{EXPSPACE}
\class{EEXPSPACE}
\class{DTIME}
\class{EEXP}
\class{L}
\class{QBF}
\class{C}
\class{CD}
\class{CND}
\class{I}
\class{AP}
\class{FP}
\class[Fsat]{F$_{sat}$}
\class[SIGMA]{$\Sigma$}
\class{ACCEPT} \class{REJECT}
\class{DTIME}
\class{NTIME}
\class{coNTIME}
\class{NSPACE}
\class{DSPACE}
\class{coNSPACE}
\class{coNTISP}
\class{BPTIME}
\class{BPP}
\class{BQP}
\class{ZPP}
\class{co}
\class{RP}
\class{PEXP}
\advice{poly}
\reg{KL}

\title{How Does Machine Learning Manage Complexity?}

\author{Lance Fortnow\thanks{Much of this work was done while a visiting fellow at Magdalen College, University of Oxford.}\\Illinois Institute of Technology}
\date{}
\begin{document}

\maketitle

\begin{abstract}
 We provide a computational complexity lens to understand the power of machine learning models, particularly their ability to model complex systems.
 
 Machine learning models are often trained on data drawn from sampleable or more complex distributions, a far wider range of distributions than just computable ones. By focusing on computable distributions, machine learning models can better manage complexity via probability. We abstract away from specific learning mechanisms, modeling machine learning as producing \P/\poly-computable distributions with polynomially-bounded max-entropy.
 
 We illustrate how learning computable distributions models complexity by showing that if a machine learning model produces a distribution $\mu$ that minimizes error against the distribution generated by a cryptographic pseudorandom generator, then $\mu$ must be close to uniform.
 \end{abstract}

\section{Introduction}

\begin{quote}
Ignorance of the different causes involved in the production of events, as well as their complexity, taken together with the imperfection of analysis, prevents our reaching certainty about the vast majority of phenomena. Thus there are things that are uncertain for us, things more or less probable, and we seek to compensate for the impossibility of knowing them by determining their different degrees of likelihood. So it was that we owe to the weakness of the human mind one of the most delicate and ingenious of mathematical theories, the science of chance or probability.

\hfill -- Pierre-Simon Laplace~\cite{laplace}
\end{quote}

Modern machine learning has modeled complex behavior in ways that go far beyond our traditional logic approaches, making tremendous advances in language translation, computer vision, Chess and Go playing, protein folding and automated programming to name just a few.

In this paper we give a complexity lens to machine learning models, abstracting the model away from the specific technologies. We then argue that paradoxically these models derive their strength from the weakness of their distributions, modeling complex behavior by probabilistic outcomes, much the way Laplace has argued that humans do the same.

In Section~\ref{klsec}, we review the argument that describing a string with a simple distribution leads to minimizing the Kullback-Leibler of the original distribution $D$ with a learned computable distribution $\mu$.

In Section~\ref{distsec}, we abstract the mechanisms of machine learning to model the output of a machine learning model as a \P/\poly-computable distribution with polynomially-bounded max entropy. We make separate arguments for the various aspects. 
\begin{itemize}
	\item Computable distributions come from viewing next-token prediction as \P-computable conditional probability functions which are equivalent to \P-computable distributions.
	\item We can achieve polynomial time even though neural nets have low depth, through either reasoning or coding. 
\item 
The heavy use of non-uniformity, expressed through the weights of the model, captures data, time and capabilities. 
\item Finally, we argue that the softmax function guarantees polynomially-bounded max entropy, i.e., that every outcome has at least exponentially small probability of occurring. 
\end{itemize} 

In Section~\ref{randomsec}, we present our main theorem. Let $\mu$ minimize $\KL(D\|\mu)$  over the set of $\P/\poly$-computable distributions with polynomially-bounded max entropy, where $D$ is a distribution generated by a cryptographic pseudorandom generator. We show this yields $\KL(U\|\mu)<1/n^c$, for all $c>0$, where $U$ is the uniform distribution. While $D$ and $U$ are computationally indistinguishable, when you learn a computable distribution $\mu$ from $D$ then $\mu$ and $U$ are information-theoretically indistinguishable. 

If we remove the max entropy condition, we can still show that $U$ and $\mu$ are statistically close.

We put it all together in Section~\ref{togethersec}, arguing that the power of modern machine learning models comes from both its ability to generate random guesses instead of specific answers, and by limiting the hypotheses to computable distributions, forces the model to represent  complex behavior by probabilities. 

\section{Previous Work}

Theoretical computer science has studied learning since the early days of the field. In 1967, Mark Gold~\cite{Gold1967} developed inductive inference and in the 1980s Leslie Valiant~\cite{Valiant-PAC} described PAC (Probably Approximately Correct) learning. Both these paradigms focused on learning deterministic algorithms and not probabilities. Kearns and Valiant~\cite{Kearns-Valiant} showed that learning circuits in general in the PAC model would break cryptography.

Shuichi Hirahara and Mikito Nanashima~\cite{HN-Pessiland} show that in a world without cryptography one can learn certain distributions well. However, their work only learns distributions with relatively small initial state, where we want a theory that handles complex distributions.

This paper tries to distill ideas from a vast literature on machine learning to abstract to a relatively simple model of machine learning that we can directly study. In each of the sections that follow, we will cite some of the seminal work that shaped the author's thinking for this paper. 

\section{Preliminaries}

This paper uses ideas, concepts and tools from information theory, Kolmogorov complexity and cryptography. For details on the results mentioned in this section and general background, the author recommends the textbooks of Cover and Thomas~\cite{CoverThomas}, Li and Vit{\'a}nyi~\cite{LiVi} and Goldreich~\cite{Goldreich1} respectively. 

A function $f(n)$ is negligible if $f(n)=o(1/n^k)$ for all $k$. We abuse notation to say $f(n)\leq \negl(n)$ if $f(n)$ is negligible. All logarithms are taken base 2.

\subsection{Nonuniform Circuits}
Every polynomial-time computable language $L$ can be computed by a family $C_0, C_1, \dots$ of polynomial-size circuits. In this framework, $C_n$ has $n$ inputs, and the number of gates of $C_n$ is bounded by $n^c$ for some constant $c$. A string $x = x_1 \dots x_n$ is in $L$ if and only if $C_n(x_1, \dots, x_n)$ outputs 1.

The converse is not true; there are languages $L$ that have polynomial-size circuits but cannot be computed in polynomial time. This is because each $C_n$ might not have any relation to $C_m$ for $n \neq m$. We can achieve equivalence by adding a uniformity condition, requiring an easily computable function $f$ such that $f(1^n) = C_n$. Alternatively, we can keep the nonuniformity to define a larger class called $\P/\poly$.

\subsection{Information Theory}
In this paper all distributions will be over $\{0,1\}^n$. We state the definitions and known relationships for polynomial-time but they all hold if we allow nonuniformity as well.

A distribution $\mu$ is \emph{polynomial-time sampleable} if there is a probabilistic polynomial-time algorithm $A$ such that $\mu(x)$ is the probability that $A(1^n)=x$.

A distribution $\mu$ is \emph{polynomial-time computable} if the CDF function $f(x)=\sum_{y\leq x}\mu(y)$ is computable in polynomial time. 

If $\mu$ is polynomial-time computable then it is sampleable and $\mu(x)=f(x)-f(x-1)$ is also computable in polynomial time.

We also consider $\P/\poly$ computable and sampleable distributions where we replace polynomial-time by non-uniform polynomial time. The same relationships hold. 

If there are cryptographic one-way functions then there are \P/\poly-sampleable distributions that are not \P/\poly-computable, for example consider the distribution $(f(x),x)$ where $f$ is a one-way function and $x$ is chosen at random. This distribution is sampleable and if it were computable we could find $x$ from $f(x)$. 

For a distribution $\mu$, the entropy of $\mu$, denoted $\H(\mu)$, is $\sum_{x\in\{0,1\}^n}\mu(x)\log 1/\mu(x)$. The max-entropy of $\mu$ is $\max_x \log 1/\mu(x)$. Polynomial-bounded max-entropy means the max-entropy is bounded by $n^k$ for some $k$, equivalent to saying $\mu(x)\geq 2^{-n^k}$. 

For two distributions $\mu$ and $\tau$ the KL-divergence of $\tau$ from $\mu$ is
\[\KL(\tau\|\mu)=\E{\tau}[\log(\tau(x)/\mu(x))]\]
The KL-divergence is not symmetric but it is always nonnegative and equal to zero only when $\tau=\mu$. It measures how well the distribution $\mu$ models $\tau$ as seen by the cross-entropy decomposition
\[\KL(\tau\|\mu)=\E{\tau}[\log(1/\mu(x))]-H(\tau).\]

We can also measure the statistical difference between $\tau$ and $\mu$ as 
\[\Delta(\tau,\mu)=\frac12\sum_x|\tau(x)-\mu(x)|\]
which is symmetric, nonnegative and zero only if $\tau=\mu$.

Pinsker's inequality allows us to bound the statistical difference from the \KL-divergence. \[
\Delta(\tau,\mu) \le \sqrt{\tfrac12\,\KL(\tau\|\mu)}
\]
The other direction does not hold as the KL-divergence can get very large if $\mu(x)\ll\tau(x)$ for some $x$, even infinite if for some $x$, $\tau(x)>\mu(x)=0$.

\subsection{Cryptography}
A \emph{cryptographic pseudorandom generator} is a function $G$ whose output on uniform inputs is not efficiently distinguishable from a uniform distribution~\cite{Y}. Formally, $G$ takes seeds of length $\ell$ to outputs of length $n>\ell$ such that for all probabilistic \P/\poly algorithms $A$,
\[|\Pr_{s\in\{0,1\}^\ell}(A(G(s))=1)-\Pr_{r\in\{0,1\}^n}(A(r)=1)|\leq\negl(n).\]
Håstad, Impagliazzo, Levin and Luby~\cite{HILL} show that one can create a cryptographic pseudorandom generator from any one-way function.

\subsection{Kolmogorov Complexity}
For a string $x$, the polynomial-time Kolmogorov complexity of $x$, $\K(x)$, is the length of the shortest program that outputs $x$. We also consider $\kpoly(x)$ where the program runs in time a fixed polynomial in $n=|x|$. We will be using Kolmogorov complexity informally in this paper so we will ignore quantifiers on the polynomials and small additive factors in the inequalities.

\section{Minimizing Kullback-Leibler Divergence}
\label{klsec}

In this section we show that finding a $\mu$ that minimizes the Kullback-Leibler divergence of a given distribution $D$ to $\mu$ arises naturally from the goal of finding the simplest explanation for $D$. This section is based on ideas from Jorma Rissanen~\cite{Rissanen}, Ray Solomonoff~\cite{Solomonoff}, and Ming Li and Paul Vitányi~\cite{LV-MDL}.

For any finite set $S$, $K(x)\leq K(S)+\log|S|$, i.e., you can describe $x$ by the description of a set and the index of $x$ in that set.

If we have equality then $x$ is a random element of $S$. We can achieve equality with $S=\{x\}$, but consider the simplest set $S$ (minimizing $K(S)$ which maximizes $|S|$) where we have equality. That breaks $x$ into a structured part, the description of $S$, and a random part, the index of $x$ in $S$. By an Occam's razor argument, the description of $S$ is the best explanation for $x$.

There is no efficient or even computable procedure for finding the best $S$. We'll now shift our focus to polynomial time and distributions.

Let $\mu$ be a P-computable distribution with CDF function $f(x)$. Then for all $x$,
\[\kpoly(x)\leq \log(\frac{1}{\mu(x)})+\kpoly(\mu)\] up to small additive factors.
There is some integer $w$ such that $f(x-1)<w/2^k\leq f(x)$ where $k=1+\lceil \log(1/\mu(x))\rceil$. Given a description of $\mu$, $k$ and $w$ we can find $x$ using binary search.

If we have equality, then $\mu$ is a structured part of which $x$ is a random example, i.e., random in $\mu$. We can get equality by a $\mu$ that puts all its weight on $x$. We want to find the simplest $\mu$, i.e., that minimizes $K(\mu)$ while achieving near equality.

Suppose $x$ comes from a distribution $D$ and we want to create the distribution $\mu$ before $x$ is drawn. Taking expected values of both sides
\[\E{D}(\kpoly(x))\leq\E{D}(\log \frac{1}{\mu(x)})+\kpoly(\mu)=\H(D)+\KL(D\|\mu)+\kpoly(\mu)
\]
by the cross-entropy decomposition.

Now $\H(D)\leq\E{D}(\kpoly(x))$ by the Shannon coding theorem~\cite{Shannon} giving 
\[\KL(D\|\mu)+\kpoly(\mu)\geq 0.\]
By the Occam's razor principle,  we can get the best model for $D$ by finding the $\P$-computable $\mu$ that minimizes $\KL(D\|\mu)+\kpoly(\mu)$.

Let's connect this with neural nets. The value $\kpoly(\mu)$ is dominated by the number of weights in the model, a hyperparameter to be optimized. Fixing that size, the model will try to find the weights that minimize $\KL(D\|\mu)$, equivalent to minimizing the cross-entropy loss, the standard training objective of neural nets.

\section{Neural Nets and Computable Distributions}
\label{distsec}

In this section, we argue that neural nets are roughly equivalent to $\P/\poly$-computable distributions with polynomially-bounded max entropy. We examine each aspect, computable distribution, polynomial-time, non-uniformity and max entropy, below.

The input length $n$ corresponds to the size of the context window, typically about one million tokens for frontier models at the time of this writing~\cite{anthropic2026opus46systemcard}. Each token averages about 3-4 bytes.

\subsection{Computable Distributions}

Neural nets predict the next token based on the tokens seen so far (the so-called ``stochastic parrots''~\cite{StochasticParrots}). Learning the next token function makes the learning process tractable. If we tried to train on entire strings, they would occur with such low probability that we wouldn't have enough samples to make reasonable hypotheses. In this section, we show that learning next-token predictors means we limit our hypotheses to computable distributions. 

The transformer model~\cite{Vaswani} gives neural nets the ability to see its input in order. Mathematically then we can describe these next token predictors as conditional probability functions. If we stick to the binary regime, a \emph{conditional probability function} for a distribution $\mu$ on $n$ bits is a function $f$ from $\{0,1\}^{<n}$ to a value $p\in[0,1]$ such that for any $y$ in $\{0,1\}^{<n}$, $f(y)$ is the conditional probability of the next bit being 1 given the prefix $y$. Formally,
\[
f(y) = \Pr_{x \sim \mu}\big[x_{|y|+1}=1 \mid x_{1\ldots |y|}=y\big].
\]
In terms of the probabilities assigned by $\mu$,
\[
f(y) = \frac{\mu(y1)}{\mu(y0)+\mu(y1)},
\]
whenever $\mu(y0)+\mu(y1) > 0$. Here $\mu(z)$ denotes the probability that the string drawn from $\mu$ begins with the prefix $z$.

Neural nets output tokens instead of bits but we can think of them just outputting one bit at a time. 

If we don't restrict the complexity of $f$, then conditional probability functions can represent any distribution. In this section we explore what happens when we do restrict the complexity.

A \P/\poly-computable conditional probability function is a conditional probability function where the function $f$ is computable in \P/\poly. 

A distribution $\mu$ is polynomial-time computable exactly when $\mu$ can be described by a polynomial-time computable conditional probability function. We give a formal proof of this connection in the Appendix (Theorem~\ref{martthm}). The same holds for $\P/\poly$-computable distributions and $\P/\poly$-computable conditional probability functions with a similar proof.

In a large language model, we need to output a sequence based on a prompt, so we need a notion of conditionally sampleable. 

A distribution $\mu$ is \textit{$\P/\poly$-conditionally sampleable} if there is a $\P/\poly$-computable probabilistic algorithm $A$ such that $A(1^n,y)$ outputs $x$ with probability $\Pr_\mu(x\ |\ y\ \sqsubseteq x)$ if $y\ \sqsubseteq x$ ($y$ is a prefix of $x$) and zero otherwise.

You can computably sample from a conditional probability function by starting at $y$ and choosing the next bits according to the prescribed probabilities. So every $\P/\poly$-computable distribution is $\P/\poly$-conditionally sampleable. 

The converse isn't likely true exactly, but we can approximate the values of the conditional probability function from a conditionally-sampleable distribution, so they are roughly equivalent.

In this paper, we assume that machine learning produces $\P/\poly$-computable distributions, both because next token predictors act this way and because we need such distributions if we want to give answers to inputs.

\subsection{Polynomial-Time}

We need to argue that we can consider \P-computable distributions when the frontier machine learning models have low depth, typically about a hundred layers. The neural nets by themselves do not have large depth, but the frontier models allow reasoning and executing code, both of which allow us to achieve polynomial depth. This section builds on the chain-of-thought reasoning described by Jason Wei, Xuezhi Wang, Dale Schuurmans, Maarten Bosma, Brian Ichter, Fei Xia, Ed Chi, Quoc V Le and  Denny Zhou~\cite{ChainOfThought}.

\subsubsection*{Reasoning}

Reasoning models allow neural nets to write down information and then take next steps based on what they wrote down. Alan Turing~\cite{Turing} defined his eponymous machine after considering how a then-human computer processed information. Thus a reasoning model acts like a Turing machine. 

More formally, the next state of a Turing machine can be computed from the current state via a small-depth circuit, even in $\NC^0$. So a neural net could via reasoning in polynomial-time compute any polynomial-time function. Keeping all these configurations will take up a large part of the context window but we only need to remember the last one, similar to the way some models compress the discussion so far so they can continue their work without going past the context window.

\subsubsection*{Coding}

More directly, neural nets can write and execute their own code. For example, neural nets are notoriously bad at arithmetic operations like multiplying large numbers, but they know how to write code to multiply, not unlike how humans would tackle the same task.

A neural net could simply write code that simulates a polynomial-size circuit based on its own weights. 

\subsection{Nonuniformity}

In computational complexity, nonuniformity allows having different programs for inputs of different lengths. The concept of nonuniformity comes up often in complexity, most notably for circuit complexity and pseudorandomness. Nonuniformity makes it more difficult to prove lower bounds as it can get in the way of straightforward diagonalization. For example, we know $\P$, polynomial-time computable languages, are strictly contained in $\EXP$, languages computable in exponential time, but for all we can prove, non-uniform polynomial-time could contain non-deterministic exponential time.

Nonuniformity was historically seen more as a technicality than as something that gave circuits significantly more power. Nearly all programs used in practice are highly uniform with a single algorithm that works on all input lengths. For example, standard algorithms for sorting, shortest path and matching work on every input length without modification. Rarely did we use nonuniformity in our programs, other than for random bits for randomized algorithms.

Machine learning puts a new face on uniformity. Machine learning typically works in two phases: the \emph{training phase} where a model's weights are set based on data, and the \emph{inference phase} where we use the model with its weights to make predictions based on new data. These weights are the nonuniform advice seemingly needed for problem solving. The latest frontier models are at their core neural nets with about a trillion parameters. Let's consider the input length to be the size of the context window, about a million tokens. The weights are somewhere between a square and a cube of the input length, like polynomial advice.

We have to be careful talking about nonuniformity and running time when working at a fixed length. Nevertheless there are good reasons why we should consider the weights as nonuniform advice.

\begin{enumerate}
	\item \textbf{Running Time}
	
	The inference phase typically runs quickly, a few seconds, or minutes in a reasoning model. A fully trained model is a straightforward circuit and computing its outcome can be done quickly, especially in parallel in large data centers. On the other hand, the training phase, where the model learns the weights via gradient descent and back propagation, often takes months training on hundreds of trillions of tokens. Parallelizing the training phase is possible but tricky and limited. It would be highly infeasible to run a training phase every time we want to do inference. It would be a stretch to say the training phase takes time exponential in the input length, but at best it is a much larger polynomial than the running time of the inference phase.
	
	\item \textbf{Data}
	
	As mentioned above, the neural net is trained on hundreds of trillions of tokens. The trained weights of the neural net directly depend on the data; in fact, the goal of the training is to be able to simulate that and similar data going forward. The weights form a sort of lossy compression of the data. So the trained weights are a nonuniform representation of the data.
	
	\item \textbf{Capabilities}
	
	There's something larger going on, though a bit harder to capture mathematically. As these models train on more and more data, their capabilities grow. The language seems more natural, the models reason and problem solve better and hallucinate less, as measured by various benchmarks. The new weights, i.e., the updated nonuniformity, create an algorithm that actually computes in a different way. So the computing itself actually improves as you update the nonuniformity in the circuits.
\end{enumerate}

While the learning phase creates a non-uniform distribution from a potentially non-uniform distribution, the learning process itself is typically uniform. 

\subsection{Max Entropy}
The KL-divergence can give a large penalty if the learned distribution puts very low weight on strings that have much larger weight on the true distribution. 

Recalling the definition of KL-divergence of $D$ from $\mu$,
\[\KL(D\|\mu)=\E{D}[\log(D(x)/\mu(x))].\]
If for some $x$, $\mu(x)\ll D(x)$ then $\KL(D\|\mu)$ can be quite high, even infinite if $D(x)>0$ while $\mu(x)=0$. 

To avoid this issue, we would like to give our learned distribution a minimum weight, equivalently a bounded max entropy. Typical learning algorithms cooperate with us, probably because they try to minimize KL-divergence.

Recall the maximum entropy of a distribution $\mu$ is
\[\max_x(\log(1/\mu(x)))\]
So a distribution $\mu$ has polynomially-bounded max-entropy if there is some $k$ such that $\mu(x)\geq 2^{-n^k}$ for all $x\in\{0,1\}^n$ which is the requirement for Theorem~\ref{prgthm}. 

Neural nets, in the abstract, never give zero weight to any output. Typically a neural net will output token $i$ with probability
\[p_i = \frac{e^{z_i}}{\sum_{j=1}^{V} e^{z_j}}\]
The $z_i$ are called \emph{logits} computed by the function
\[z_i=w_i\cdot h+b_i\]
where $h$ is a vector representation of the input computed by the neural net, $w_i$ are vectors of additional learned weights and $b_i$ is a bias scalar. If all these values are bounded by a polynomial, as they typically are in neural nets, then the output distribution has polynomially-bounded max entropy. This even holds if there are exponentially many output tokens.

\section{Complexity as Randomness}
\label{randomsec}

As we showed in Section~\ref{distsec}, a trained machine learning model generates a computable distribution. Informally, our world creates distributions generated through a series of complex interactions of random and unpredictable events. Restricting to computable distributions forces the machines to treat this complexity as random events. 

To illustrate this point consider the distribution $D$ generated by a cryptographic pseudorandom generator and $U$ a uniform distribution. The distribution $D$ is a $\P/\poly$-sampleable distribution and an easy calculation shows that $\KL(D\| U)$ is $n$ minus the key size of the generator. While $D$ and $U$ are indistinguishable by $\P/\poly$ algorithms, they are far apart as distributions in an information theoretic sense.

We show that if $\P/\poly$-computable $\mu$ is optimally learned being trained on $D$, then $\mu$ is close to uniform in the information theoretic sense. Specifically we show that if $\mu$ is a $\P/\poly$-computable distribution with polynomially-bounded max entropy such that the Kullback-Leibler divergence of $D$ from $\mu$ is at most its divergence from the uniform distribution, then the divergence of the uniform distribution from $\mu$ is small. If we remove the max-entropy requirement, we still show that $\mu$ is statistically close to uniform. 

\begin{theorem}
	\label{prgthm}
	Let $D$ be the output distribution of a cryptographic pseudorandom generator on
	$\{0,1\}^n$, and let $U$ denote the uniform distribution on $\{0,1\}^n$.

	Suppose $\mu$ is a $\P/\poly$-computable distribution with polynomially-bounded max entropy such that $\KL(D\|\mu)\leq \KL(D\|U)$. 
	Then $\KL(U\|\mu)$
	is negligible in $n$, i.e., $\KL(U\|\mu)\leq \frac{1}{n^c}$ for all $c>0$.
	
\end{theorem}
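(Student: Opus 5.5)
The idea is to turn the hypothesis into a statement about the single test function $g(x)=\log(1/\mu(x))$, and then use pseudorandomness of $D$ to transfer that statement from $D$ to $U$. By the cross-entropy decomposition,
\[\KL(D\|\mu)=\E{D}[g(x)]-\H(D) \quad\text{and}\quad \KL(D\|U)=n-\H(D).\]
So the hypothesis $\KL(D\|\mu)\leq\KL(D\|U)$ is equivalent to $\E{D}[g(x)]\leq n$. In the same way, $\KL(U\|\mu)=\E{U}[g(x)]-n$. Hence it suffices to show
\[\E{U}[g(x)]\leq \E{D}[g(x)]+\negl(n).\]

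Two properties of $g$ matter. First, $g$ is bounded: $\mu(x)\leq 1$ gives $g(x)\geq 0$, and polynomially-bounded max entropy gives $g(x)\leq n^k$ for some fixed $k$. Second, $g$ is (approximately) computable in $\P/\poly$. Since $\mu$ is $\P/\poly$-computable, $\mu(x)=f(x)-f(x-1)$ is computable exactly as a rational with polynomially many bits. From it we can compute an approximation $\tilde g(x)$ with $|\tilde g(x)-g(x)|\leq 2^{-n}$ in polynomial time, clipped to $[0,n^k]$.

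Next I convert the real-valued test into a Boolean distinguisher $A$. On input $x$, $A$ picks a uniformly random threshold $t\in[0,n^k]$, discretized to $2^{-n}$ granularity, and outputs $1$ iff $\tilde g(x)>t$. Then
\[\Pr[A(x)=1]=\tilde g(x)/n^k \pm 2^{-n}.\]
Averaging over $x$ gives
\[\Pr_{s}[A(G(s))=1]-\Pr_{r}[A(r)=1]=\frac{\E{D}[\tilde g]-\E{U}[\tilde g]}{n^k}\pm 2^{-n+1}.\]
$A$ is a probabilistic $\P/\poly$ algorithm, so by the definition of a cryptographic pseudorandom generator this difference is at most $\negl(n)$. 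It follows that
\[|\E{D}[g]-\E{U}[g]|\leq n^k\cdot\negl(n)+O(n^k2^{-n}),\]
which is still negligible because $k$ is a fixed constant. Combining this with $\E{D}[g]\leq n$ gives $\KL(U\|\mu)\leq\negl(n)$. Nonnegativity of KL gives the matching lower bound, so $\KL(U\|\mu)\leq 1/n^c$ for every $c>0$.

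The main obstacle is exactly the step where the max-entropy bound is used: the distinguisher's advantage must be multiplied by the range of $g$, and this is harmless only because the range is polynomial. Without the bound, $g$ could be exponentially large on a few points and the transfer would fail, which is why only statistical closeness is claimed in that case. A secondary technical point is precision: computing $\log$ of a rational approximately, and discretizing the threshold so that $A$ uses polynomially many random bits. I would handle both with $2^{-n}$-accurate approximations, whose errors are absorbed into the negligible term.
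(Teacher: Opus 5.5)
Your proposal is correct and follows essentially the same route as the paper: your $g(x)=\log(1/\mu(x))$ is the paper's $f_\mu(x)+n$. The paper likewise turns the hypothesis into $\E{D}[f_\mu(x)]\le 0$ and transfers this to $\E{U}[f_\mu(x)]=\KL(U\|\mu)$ using a distinguisher that outputs $1$ with probability $\log(1/\mu(x))/n^k$. Your random-threshold implementation and $2^{-n}$-precision bookkeeping just make explicit details that the paper leaves implicit.
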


\begin{proof}
	Define
	\[
	f_\mu(x) \;=\; \log \frac{2^{-n}}{\mu(x)}=\log\frac{1}{\mu(x)}-n.\]
	Since $\mu$ has polynomially-bounded max entropy there is some $k$ such that $\log\frac{1}{\mu(x)}\leq n^k$ and $-n\leq f_\mu(x)\leq n^k-n$ for all $x\in\{0,1\}^n$.
	
	\[
	\E{U}[f_\mu(x)]
	=
	\sum_{x\in\{0,1\}^n} 2^{-n}\log\frac{2^{-n}}{\mu(x)}
	=
	\KL(U\|\mu).
	\]
	On the other hand,
	\[
	\E{D}[f_\mu(x)]
	=
	\sum_{x\in\{0,1\}^n} D(x)\log\frac{2^{-n}}{\mu(x)}.
	\]
	Using
	\[
	\KL(D\|\mu)=\E{D}[\log\frac{1}{\mu(x)}]-H(D)
	\qquad\text{and}\qquad
	\KL(D\|U)=\E{D}[\log\frac{1}{2^{-n}}]-H(D).
	\]
	we obtain
	\[
	\KL(D\|\mu)-\KL(D\|U)=\E{D}[\log\frac{1}{\mu(x)}]-\E{D}[\log\frac{1}{2^{-n}}]=\E{D}[\log\frac{2^{-n}}{\mu(x)}]=\E{D}[f_\mu(x)]
	\]
	and thus
	\[
	\E{D}[f_\mu(x)]
	=
	\KL(D\|\mu)-\KL(D\|U)
	\leq 0.
	\]
by the assumption that $\KL(D\|\mu)\leq \KL(D\|U)$.
	
	Because $D$ is pseudorandom, it is computationally indistinguishable from $U$.
	Since $f_\mu$ is $\P/\poly$-computable and polynomially bounded, this implies
	\[
	\left|
	\E{D}[f_\mu(x)]-\E{U}[f_\mu(x)]
	\right|
	\leq \operatorname{negl}(n).
	\]
	Otherwise we could distinguish $D$ from $U$ by an algorithm $A(x)$ that outputs $1$ with probability $(f_\mu(x)+n)/n^k$, which is a valid probability in $[0,1]$ because $f_\mu(x)+n = \log(1/\mu(x))$, and $0 \le \log(1/\mu(x)) \le n^k$.
	
	Hence
	\[
	\E{U}[f_\mu(x)]
	\leq
	\E{D}[f_\mu(x)] + \operatorname{negl}(n)
	\leq
	\operatorname{negl}(n).
	\]
	Recalling that
	\[
	\E{U}[f_\mu(x)] = \KL(U\|\mu),
	\]
	we conclude that
	\[
	\KL(U\|\mu)\leq \operatorname{negl}(n).
	\]
	Thus $\KL(U\|\mu)$ is negligible, as claimed.
\end{proof}

We need some sort of max-entropy bound for Theorem~\ref{prgthm}, because if $\mu(x)$ was extremely small for some $x$, $\KL(U\|\mu)$ could be very high. However we can use the proof of Theorem~\ref{prgthm} to show that even if we remove the max-entropy condition, we still get $\mu$ statistically close to uniform. First we need the following lemma.

\begin{lemma}
	\label{sdlem}
Let $D$ and $\mu$ be distributions over $\{0,1\}^n$. Define
	$\mu'(x) = (1-2^{-n})\mu(x) + 2^{-2n}$
	for all $x\in\{0,1\}^n$, then \[\KL(D\|\mu') \le \KL(D\|\mu) + \log\!\left(\frac{1}{1-2^{-n}}\right)\leq\KL(D\|\mu) + \negl(n).
	\]
\end{lemma}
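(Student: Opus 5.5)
The plan is a pointwise comparison followed by taking the $D$-expectation; no property of $D$ is needed. First check that $\mu'$ is a distribution:
\[\sum_x \mu'(x) = (1-2^{-n}) + 2^n\cdot 2^{-2n} = 1.\]
Next observe the pointwise lower bound $\mu'(x)\ge (1-2^{-n})\mu(x)$, which holds because the added term $2^{-2n}$ is nonnegative.

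Taking logarithms gives, for every $x$,
\[\log\frac{D(x)}{\mu'(x)} \le \log\frac{D(x)}{\mu(x)} + \log\frac{1}{1-2^{-n}}.\]
Averaging over $x\sim D$ then yields the first inequality, $\KL(D\|\mu')\le \KL(D\|\mu)+\log\frac{1}{1-2^{-n}}$.

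The only care needed is at points $x$ with $D(x)>0$ and $\mu(x)=0$. There $\KL(D\|\mu)=\infty$ and the inequality is trivial. Points with $D(x)=0$ contribute $0$ to both sides by the usual convention. So I will dispose of the infinite case first and then assume $\mu(x)>0$ wherever $D(x)>0$.

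For the second inequality, I bound the constant directly. Since $-\ln(1-t)\le t/(1-t)$ for $0\le t<1$,
\[\log\frac{1}{1-2^{-n}} \le \frac{2^{-n}}{(1-2^{-n})\ln 2} \le \frac{2\cdot 2^{-n}}{\ln 2}\]
for $n\ge 1$. This is $O(2^{-n})$, hence $o(1/n^k)$ for every $k$, i.e.\ negligible.

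None of the steps is a real obstacle. The only point needing attention is the convention for zero probabilities; note that $\mu'$ is built with minimum weight $2^{-2n}$ precisely so that it has polynomially-bounded max entropy, even when $\mu$ does not.
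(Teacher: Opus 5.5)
Your proposal is correct and follows essentially the same route as the paper: check that $\mu'$ sums to $1$, use the pointwise bound $\mu'(x)\ge(1-2^{-n})\mu(x)$, take logarithms and average over $x\sim D$, then note that $\log\frac{1}{1-2^{-n}}=O(2^{-n})$ is negligible. Your explicit handling of the case $D(x)>0=\mu(x)$ and your explicit bound via $-\ln(1-t)\le t/(1-t)$ are a bit more careful than the paper's treatment, but they change nothing essential.
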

	The last inequality holds because $\log\!\left(\frac{1}{1-2^{-n}}\right) = -\log(1-2^{-n}) \approx 2^{-n}$.

\begin{proof}
	First observe that $\mu'$ is a probability distribution. Indeed,
	\[
	\sum_{x\in\{0,1\}^n} \mu'(x)
	= (1-2^{-n})\sum_x \mu(x) + \sum_x 2^{-2n}
	= (1-2^{-n}) + 2^n\cdot 2^{-2n}
	= 1.
	\]
	
	For every $x$, we have $\mu'(x) = (1-2^{-n})\mu(x)+2^{-2n} \ge (1-2^{-n})\mu(x)$. Hence for any $x$ with $D(x)>0$,
	\[
	\log \frac{D(x)}{\mu'(x)}
	\le
	\log \frac{D(x)}{(1-2^{-n})\mu(x)}
	=
	\log \frac{D(x)}{\mu(x)} + \log\!\left(\frac{1}{1-2^{-n}}\right).
	\]
	
	Multiplying by $D(x)$ and summing over $x$ gives
	\[
	\KL(D\|\mu')
	= \sum_x D(x)\log\frac{D(x)}{\mu'(x)}
	\le
	\sum_x D(x)\log\frac{D(x)}{\mu(x)}
	+
	\log\!\left(\frac{1}{1-2^{-n}}\right).
	\]
	Thus $\KL(D\|\mu') \le \KL(D\|\mu) + \log\!\left(\frac{1}{1-2^{-n}}\right)$.
	
	Finally, $\log\!\left(\frac{1}{1-2^{-n}}\right)=O(2^{-n})$, which is negligible.
\end{proof}

If we do not restrict the max-entropy of $\mu$, we can still show that $\mu$ that minimizes $\KL(D\|\mu)$ will be statistically close to uniform.

\begin{corollary}
	\label{sdprgcor}
		Let $D$ be the output distribution of a cryptographic pseudorandom generator on
	$\{0,1\}^n$, and let $U$ denote the uniform distribution on $\{0,1\}^n$. 
	
	Let $\mu$ be a $\P/\poly$-computable distribution such that \[\KL(D\|\mu)\leq \KL(D\|U),\] then the statistical difference between $U$ and $\mu$
	is negligible in $n$.
\end{corollary}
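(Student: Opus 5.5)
The plan is to reduce to Theorem~\ref{prgthm} by smoothing $\mu$ with Lemma~\ref{sdlem}, and then convert the resulting KL bound into a statistical-distance bound with Pinsker's inequality.

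First, define $\mu'(x)=(1-2^{-n})\mu(x)+2^{-2n}$ as in Lemma~\ref{sdlem}. Then $\mu'(x)\ge 2^{-2n}$ for every $x$, so $\mu'$ has max entropy at most $2n$, which is polynomially bounded. Moreover $\mu'$ is $\P/\poly$-computable, since its CDF is $(1-2^{-n})f(x)+(x+1)2^{-2n}$, where $f$ is the CDF of $\mu$ and $x$ is read as an integer. Lemma~\ref{sdlem} gives
\[
\KL(D\|\mu')\le \KL(D\|\mu)+\negl(n)\le \KL(D\|U)+\negl(n).
\]

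We cannot apply Theorem~\ref{prgthm} verbatim, because its hypothesis $\KL(D\|\mu')\le\KL(D\|U)$ now holds only up to an additive $\epsilon=\negl(n)$. Instead we rerun its proof. With $f_{\mu'}(x)=\log(2^{-n}/\mu'(x))$, the same cross-entropy computation gives $\E{D}[f_{\mu'}(x)]=\KL(D\|\mu')-\KL(D\|U)\le\epsilon$. The function $f_{\mu'}$ takes values in $[-n,n]$ and is $\P/\poly$-computable. Pseudorandomness then gives $|\E{D}[f_{\mu'}]-\E{U}[f_{\mu'}]|\le\negl(n)$, using the distinguisher that accepts with probability $(f_{\mu'}(x)+n)/(2n)$. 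Scaling a negligible advantage by the polynomial $2n$ keeps it negligible. Hence
\[
\KL(U\|\mu')=\E{U}[f_{\mu'}(x)]\le\negl(n).
\]

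Next we apply Pinsker: $\Delta(U,\mu')\le\sqrt{\tfrac12\KL(U\|\mu')}$. The square root of a negligible function is negligible, because $o(n^{-2k})$ under a square root is $o(n^{-k})$. Finally, $\Delta(\mu,\mu')=\tfrac12\sum_x|2^{-n}\mu(x)-2^{-2n}|\le 2^{-n}$. The triangle inequality then gives $\Delta(U,\mu)\le\negl(n)+2^{-n}$, which is negligible.

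The main obstacle is the slack step. Theorem~\ref{prgthm} as stated needs the exact inequality, so we must re-derive it with an additive negligible error rather than cite it. We must also check that $\mu'$ is still $\P/\poly$-computable, so that the distinguisher is legitimate. Both checks are routine, but they are where the argument could break if they were glossed over.
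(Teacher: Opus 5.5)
Your proposal is correct and follows essentially the same route as the paper. You smooth $\mu$ to $\mu'$ via Lemma~\ref{sdlem}, rerun the proof of Theorem~\ref{prgthm} with the negligible slack to get $\KL(U\|\mu')\le\negl(n)$, and finish with Pinsker plus the triangle inequality using $\Delta(\mu,\mu')\le 2^{-n}$. You also make explicit several checks the paper leaves implicit: the CDF of $\mu'$, the range $[-n,n]$ of $f_{\mu'}$, the polynomial scaling of the distinguisher's advantage, and the fact that the square root of a negligible function is negligible.
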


\begin{proof}
	Let $\mu'$ be the distribution from Lemma~\ref{sdlem} with $\KL(D\|\mu') \leq\KL(D\|\mu) + \negl(n)$. $\mu'$ is a $\P/\poly$-computable distribution with polynomially-bounded max entropy. 
	
	We have $\KL(D\|\mu)\leq\KL(D\|U)$ so $\KL(D\|\mu')\leq\KL(D\|U)+\negl(n)$. By the proof of Theorem~\ref{prgthm}, we have $\KL(U\|\mu')\leq\negl(n)$. 
	
	By Pinsker's inequality (see~\cite{CoverThomas}), the statistical difference of $U$ and $\mu'$ is bounded by $\sqrt{\KL(U\|\mu')/2}$ which is $\negl(n)$ and since $\mu$ and $\mu'$ have negligible statistical difference, the statistical difference of $U$ and $\mu$ is negligible. 
	
\end{proof}

\section{Conclusions}
\label{togethersec}

In Section~\ref{distsec}, we showed that we can model modern machine learning models as $\P/\poly$-computable distributions with polynomially-bounded max-entropy. The non-uniformity and full-depth polynomial circuits allow the flexibility to learn from complex distributions.

In Section~\ref{klsec}, we showed that minimizing the \KL-divergence over $\P/\poly$-computable distributions best captures the complexity of describing the distribution of strings, consistent with the distributions from Section~\ref{distsec}.

Instead of rigidly forcing the hypothesis to give us a single answer, by looking at distributions we allow the models to make mistakes, and with polynomially-bounded max entropy the models never completely eliminate any possible output. This allows the modeling of complex behavior by probability, which we illustrate by Theorem~\ref{prgthm} in Section~\ref{randomsec}, which shows that we can't do better at modeling the output of a pseudorandom generator than by a uniform distribution. 

James Kurose, the noted networking researcher, has a saying that the Internet worked so well because it doesn't have to. In other words, because we don't require Internet protocols to guarantee delivery, we can create protocols that are far more robust and powerful. In this paper, we show a similar property for machine learning, because we don't require complete accuracy, that allows the models to capture far more complex behavior. 
	
\section{Future Work}
\label{futuresec}

In a 2022 CACM article~\cite{pvnp50-O}, the author observed that many of the good implications from $\P=\NP$ are coming true while cryptography remains secure, a world he called ``Optiland,'' the supposedly impossible counterpart to Russell Impagliazzo's ``Pessiland''~\cite{Impagliazzo95}. This paper is a first step toward understanding how machine learning makes Optiland possible from a computational complexity viewpoint.

Further study will likely require a fuller understanding of the models, further refinements that better capture the true learning capabilities of modern machine-learning methods, and help us determine the computational power and limitations of these systems. 

Some puzzles remain, in particular an efficient learning process cannot in general find the $\P/\poly$-computable distribution that has the smallest $\KL$-divergence with a $\P/\poly$-sampleable distribution. Consider a one-way permutation $f_k(x)$ (like AES) that is computable and invertible with a given key. Then $(f_k(x),x)$ is $\P/\poly$-sampleable and computable if the key is in the advice since the marginal $y=f_k(x)$ is uniform. Learning the computable distribution, though, would break the one-way function.

\section{Acknowledgments}

Discussions with Matthew Gray, building on his unpublished work with Taiga Hiroka, helped formulate the statement of Theorem~\ref{prgthm} and the counterexample at the end of Section~\ref{futuresec} that we couldn't fully minimize the $\KL$-divergence of a $\P/\poly$-computable distribution without breaking cryptography.

The author also thanks Rohan Acharya, Harry Buhrman, Om Mishra, Rahul Santhanam and Rakesh Vohra for helpful discussions, as well as countless others I have talked to about these questions over the past several years.

The author also had discussion on these topics with several AI models including Gemini, ChatGPT and Claude. ChatGPT helped with the proof and write-up of Theorem~\ref{prgthm} and Theorem~\ref{martthm}. The author verified and takes responsibility for the proofs, and added details for clarity. The text of the article was fully written by the author, though Claude and Gemini were used for proofreading, stylistic suggestions, and some literature search.

\bibliographystyle{alpha} \bibliography{b,pubs}

\appendix
\section{Appendix}

\begin{theorem}
	\label{martthm}
	Let $\mu$ be a distribution on $\{0,1\}^n$. The following are equivalent.
	
	\begin{enumerate}
		\item The cumulative distribution function
		\[
		F_\mu(x)=\sum_{z \le x} \mu(z)
		\]
		is computable in polynomial time, where $\le$ is the lexicographic order on
		$\{0,1\}^n$.
		
		\item The associated conditional probability function
		\[
		f_\mu(y)=\Pr_{x\sim\mu}[x_{|y|+1}=1 \mid x_{1\ldots |y|}=y]
		\]
		is in $\mathrm{FP}$ for all prefixes $y\in\{0,1\}^{<n}$, with the convention
		that $f_\mu(y)=0$ if $\mu(y0)+\mu(y1)=0$.
	\end{enumerate}
\end{theorem}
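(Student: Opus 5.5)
The plan is to prove both directions by expressing each object as a short arithmetic combination of values of the other. Both directions go through \emph{prefix probabilities} $\mu(y)=\Pr_{x\sim\mu}[y\sqsubseteq x]$ for $y\in\{0,1\}^{\le n}$. Throughout, I will read ``computable in polynomial time'' as ``outputs the exact rational value, written as a numerator and a denominator of polynomially many bits.'' This is the natural reading for $\mathrm{FP}$. I will note where it is needed.

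\textbf{(1)$\Rightarrow$(2).} First I will show that every prefix probability is a difference of two CDF values. For $y$ of length $m$, the strings with prefix $y$ form a contiguous interval in lexicographic order, from $y0^{n-m}$ to $y1^{n-m}$. Hence
\[
\mu(y)=F_\mu(y1^{n-m})-F_\mu(\mathrm{pred}(y0^{n-m})),
\]
where $\mathrm{pred}$ is the lexicographic predecessor, with the convention $F_\mu(\mathrm{pred}(0^n))=0$. Computing the predecessor is easy, so $\mu(y)$ costs two calls to $F_\mu$ and one subtraction of polynomial-size rationals. Then $f_\mu(y)=\mu(y1)/(\mu(y0)+\mu(y1))$, and we output $0$ when the denominator vanishes. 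This is a constant number of exact rational operations, so it is polynomial time.

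\textbf{(2)$\Rightarrow$(1).} I will use the chain rule. For $y=y_1\cdots y_m$,
\[
\mu(y)=\prod_{i=1}^{m} g_i, \qquad g_i=f_\mu(y_1\cdots y_{i-1}) \text{ if } y_i=1, \quad g_i=1-f_\mu(y_1\cdots y_{i-1}) \text{ if } y_i=0.
\]
The convention $f_\mu=0$ on zero-probability prefixes causes no harm. Once some prefix has probability $0$, an earlier factor is already $0$, so the product is correctly $0$ whatever the later factors are. Next I decompose $\{z\le x\}$ into disjoint prefix-classes. The set of $z<x$ is the disjoint union, over positions $i$ with $x_i=1$, of the strings with prefix $x_1\cdots x_{i-1}0$. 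So
\[
F_\mu(x)=\mu(x)+\sum_{i:\,x_i=1}\mu(x_1\cdots x_{i-1}0).
\]
This is at most $n+1$ prefix probabilities, each a product of at most $n$ calls to $f_\mu$. That gives $O(n^2)$ oracle calls and polynomially many rational multiplications and additions.

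\textbf{Main obstacle.} The only delicate point is bit-length growth in the direction (2)$\Rightarrow$(1). I must check that multiplying $n$ rationals, each of $p(n)$ bits, and then adding $n+1$ such products still gives numbers of polynomial size. Numerators and denominators of a product of $n$ terms have at most $n\,p(n)$ bits. A sum of $n+1$ such fractions over a common denominator has at most about $n^2p(n)+\log(n+1)$ bits, which is polynomial, so every arithmetic step is polynomial time. This is exactly where the exact-rational reading is used. If $f_\mu$ were only approximable to $t$ bits, the same formulas would give $F_\mu$ to within about $n^2 2^{-t}$, which suffices for an approximate version of the theorem.
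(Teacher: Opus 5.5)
Your proposal is correct and follows the paper's proof essentially step for step. For (1)$\Rightarrow$(2) you write prefix probabilities as differences of CDF values at the endpoints of the lexicographic interval, and for (2)$\Rightarrow$(1) you use the chain rule together with the decomposition $F_\mu(x)=\mu(x)+\sum_{i:\,x_i=1}\mu(x_1\cdots x_{i-1}0)$. Your explicit treatment of the zero-probability convention in the chain rule and of bit-length growth of the exact rationals supplies details the paper leaves implicit.
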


\begin{proof}
	For a prefix $y\in\{0,1\}^k$, let
	\[
	\mu(y)=\Pr_{x\sim\mu}[x_{1\ldots k}=y]
	\]
	denote the cylinder probability of $y$. Then
	\[
	f_\mu(y)=\frac{\mu(y1)}{\mu(y)}
	\]
	whenever $\mu(y)>0$, and $f_\mu(y)=0$ otherwise.
	
	We show each direction separately.
	
	\medskip
	
	\noindent\textbf{(1) $\Rightarrow$ (2).}
	Assume $F_\mu\in \mathrm{FP}$. We first show that for every prefix $y$,
	the quantity $\mu(y)$ is polynomial-time computable.
	
	Let $|y|=k$. The set of strings in $\{0,1\}^n$ having prefix $y$ is a
	contiguous interval in lexicographic order, namely
	\[
	\{y0^{n-k},\, y0^{n-k}+1,\,\dotsc,\, y1^{n-k}\}.
	\]
	Thus if we write
	\[
	a_y = y0^{\,n-k}
	\qquad\text{and}\qquad
	b_y = y1^{\,n-k},
	\]
	then
	\[
	\mu(y)=\sum_{a_y \le z \le b_y} \mu(z).
	\]
	Using the CDF, this becomes
	\[
	\mu(y)=F_\mu(b_y)-F_\mu(a_y-1),
	\]
	where $a_y-1$ denotes the lexicographic predecessor of $a_y$, and if
	$a_y$ is the minimum string $0^n$, then $F_\mu(a_y-1)$ is understood to be $0$.
	
	Similarly,
	\[
	\mu(y1)=F_\mu(b_{y1})-F_\mu(a_{y1}-1),
	\]
	so both $\mu(y)$ and $\mu(y1)$ are computable in polynomial time.
	
	Therefore
	\[
	f_\mu(y)=
	\begin{cases}
		\mu(y1)/\mu(y), & \text{if } \mu(y)>0,\\[1ex]
		0, & \text{if } \mu(y)=0,
	\end{cases}
	\]
	is polynomial-time computable as well. Hence $f_\mu\in \mathrm{FP}$.
	
	\medskip
	
	\noindent\textbf{(2) $\Rightarrow$ (1).}
	Assume $f_\mu\in \mathrm{FP}$. We show that $F_\mu\in \mathrm{FP}$.
	
	First observe that from $f_\mu$ one can compute $\mu(x)$ for every
	$x=x_1x_2\cdots x_n\in\{0,1\}^n$. Indeed, if $x_{<i}=x_1\cdots x_{i-1}$,
	then
	\[
	\mu(x)
	=
	\prod_{i=1}^n
	\Pr[x_i \mid x_{<i}]
	=
	\prod_{i=1}^n
	\begin{cases}
		f_\mu(x_{<i}), & \text{if } x_i=1,\\[1ex]
		1-f_\mu(x_{<i}), & \text{if } x_i=0.
	\end{cases}
	\]
	This is polynomial-time computable since there are only $n$ factors and
	each value $f_\mu(x_{<i})$ is polynomial-time computable.
	
	Now let $x\in\{0,1\}^n$. We express $F_\mu(x)=\sum_{z\le x}\mu(z)$ as a sum of
	cylinder probabilities. Suppose
	\[
	x = x_1x_2\cdots x_n.
	\]
	A string $z\le x$ either equals $x$, or is determined by the first position $i$ at which $z_i<x_i$.
	The latter can happen only when $x_i=1$, in which case $z_i=0$ and the previous
	bits agree with $x$. Hence
	\[
	F_\mu(x)
	=
	\mu(x)+\sum_{i:\,x_i=1} \mu(x_1x_2\cdots x_{i-1}0).
	\]
	
	So it suffices to compute each cylinder probability $\mu(y)$ in polynomial time.
	If $y=y_1\cdots y_k$, then
	\[
	\mu(y)
	=
	\prod_{i=1}^k
	\begin{cases}
		f_\mu(y_{<i}), & \text{if } y_i=1,\\[1ex]
		1-f_\mu(y_{<i}), & \text{if } y_i=0.
	\end{cases}
	\]
	
	Thus $\mu(x)$ and every term $\mu(x_1\cdots x_{i-1}0)$ are polynomial-time computable,
	and there are at most $n+1$ such terms. Therefore $F_\mu(x)$ is computable
	in polynomial time, and hence $F_\mu\in \mathrm{FP}$.
\end{proof}

\end{document}